\newtheorem{theorem}{Theorem}
\title{DS-AL: A Dual-Stream Analytic Learning for Exemplar-Free Class-Incremental Learning}
\author {
Huiping Zhuang\textsuperscript{\rm 1},
Run He\textsuperscript{\rm 1},
Kai Tong\textsuperscript{\rm 1},
Ziqian Zeng\textsuperscript{\rm 1},
Cen Chen\textsuperscript{\rm 2,3}\thanks{Corresponding author},
Zhiping Lin\textsuperscript{\rm 4}
}
\begin{document}

\maketitle

\begin{abstract}
	Class-incremental learning (CIL) under an exemplar-free constraint has presented a significant challenge. Existing methods adhering to this constraint are prone to catastrophic forgetting, far more so than replay-based techniques that retain access to past samples. In this paper,  to solve the  exemplar-free CIL problem, we propose a Dual-Stream Analytic Learning (DS-AL) approach. The DS-AL contains a main stream offering an analytical (i.e., closed-form) linear solution, and a compensation stream improving the inherent under-fitting limitation due to adopting linear mapping. The main stream redefines the CIL problem into a Concatenated Recursive Least Squares (C-RLS) task, allowing an equivalence between the CIL and its joint-learning counterpart. The compensation stream is governed by a Dual-Activation Compensation (DAC) module. This module re-activates the embedding with a different activation function from the main stream one, and seeks fitting compensation by projecting the embedding to the null space of the main stream's linear mapping. Empirical results demonstrate that the DS-AL, despite being an exemplar-free technique, delivers performance comparable with or better than that of replay-based methods across various datasets, including CIFAR-100, ImageNet-100 and ImageNet-Full. Additionally, the C-RLS' equivalent property allows the DS-AL to execute CIL in a phase-invariant manner. This is evidenced by a never-before-seen 500-phase CIL ImageNet task, which performs on a level identical to a 5-phase one. Our codes are available at \url{https://github.com/ZHUANGHP/Analytic-continual-learning}.
	
\end{abstract}

\section{Introduction}
\label{section_introduction}
Class-incremental learning (CIL) \cite{,LwF2018TPAMI,iCaRL2017_CVPR} updates a network's parameters in a phase-by-phase manner, with data arriving separately in each training phase. CIL has gained popularity for its ability to adapt trained models to unseen data categories, reducing energy consumption. Its development is emphasized by the fact that data and target categories are often available at specific locations or time slots. Furthermore, CIL is intuitively inspired by the human learning process, where individuals build upon their existing knowledge by assimilating new information continuously.

CIL offers several benefits but can also lead to \textit{catastrophic forgetting} \cite{cil_review2021NNs}, where models quickly lose previously learned knowledge when acquiring new tasks. This issue is exacerbated under an exemplar-free constraint, where previous experience cannot be stored or revisited. Researchers have proposed various solutions, including replay-based CIL and exemplar-free CIL (EFCIL). Replay-based methods perform competitively but violate the exemplar-free constraint. EFCIL techniques honor data privacy but often yield inferior results.

The EFCIL family consists of a few sub-branches, with regularization-based methods being the most prevalent branch. These methods reduce catastrophic forgetting by imposing constraints to prevent the change of important weights or activations. However, they are usually inadequate as their performances cannot match those of replay-based counterparts \cite{RMM2021NeuriPS}. Recently, a new branch of CIL, the analytic learning (AL) based CIL \cite{ACIL2022NeurIPS}, has shown promising results even while adhering to the exemplar-free constraint. 

The AL-based methods identify the iterative mechanism as the primary cause of catastrophic forgetting and replace it with linear recursive tools. For the first time, they provide results on par with those of replay-based techniques. Additionally, the recursive operation allows the AL-based CIL to give strong performance especially under large-phase scenarios (e.g., 50-phase). However, existing AL-based methods may suffer from an under-fitting dilemma because they rely solely on one linear projection. This has motivated the exploration of potential approaches to compensate for the limited fitting power of vanilla linearity inherited from the AL-based CIL.

In this paper, we introduce a Dual-Stream Analytic Learning (DS-AL). The DS-AL contains a main stream offering an analytical (i.e., closed-form) linear solution, and a compensation stream that improves the inherent under-fitting limitation due to adopting linear mapping. The DS-AL develops the AL-based CIL family by compensating the lack of fitting power, without losing the basic benefits inherited from this category. The key contributions are summarized as follows. 

$\bullet$ We present the  DS-AL, an exemplar-free technique that offers an analytical solution to the CIL problem.

$\bullet$ The DS-AL's main stream redefines the CIL problem into a Concatenated Recursive Least Squares (C-RLS) task, allowing an equivalence between the CIL and its joint-learning. Therefore, models trained in a CIL manner yield identical results to those employing data from both current and historical phases concurrently.

$\bullet$ The DS-AL compensation stream introduces a Dual-Activation Compensation (DAC)  module. This enables our method to overcome the under-fitting limitation inherited from the AL-based CIL.

$\bullet$ Our experiments on benchmark datasets show that the DS-AL, despite being an EFCIL technique, delivers performance better than that of existing AL-based techniques and even surpasses most replay-based methods. Additionally, we conduct a  never-before-seen 500-phase task on ImageNet. It achieves a near-identical result to its 5-phase counterpart, further highlighting DS-AL's equivalent property.

\section{Related Works}
In this section, we review CIL techniques related to our proposed method, which fall into two categories: replay-based and EFCIL techniques. 

\subsection{Replay-based CIL}
Replay-based CIL reinforces models' memory of past knowledge by replaying historical experience. The replay mechanism was first introduced by iCaRL \cite{iCaRL2017_CVPR} and has gained popularity with various attempts follow due to its competitive performance. For instance, end-to-end incremental learning \cite{EEIL2018_ECCV} introduces balanced training via replaying. Bias correction \cite{BiC2019_CVPR} includes an extra trainable layer. LUCIR \cite{LUCIR2019_CVPR} replaces the softmax layer with a cosine one. PODNet \cite{podnet2020ECCV} implements spatial-based distillation loss. FOSTER \cite{FOSTER2022ECCV} adopts a two-stage learning by expanding and reducing the network.

There are also plug-in techniques that can be attached to existing replay-based methods, achieving state-of-the-art (SOTA) performance. For instance, AANets \cite{AANet_2021_CVPR} incorporates stable and plastic blocks to balance stability and plasticity, improving performance when plugged into techniques such as PODNet. Similarly, the Mnemonics technique \cite{Mnemonics_2020_CVPR} explores exemplar storing mechanism. Reinforced memory management (RMM) \cite{RMM2021NeuriPS} leverages reinforcement learning and constructs dynamic memory management for exemplars, achieving outstanding results when attached to PODNet and AANets. Online hyperparameter optimization \cite{Online2023AAAI} adaptively optimizes the stability-plasticity balance without prior knowledge.

In general, replay-based CIL achieves satisfactory results with the need to store previously visited samples.

\subsection{Exemplar-free CIL}
EFCIL methods branch roughly into regularization-based CIL, prototype-based CIL, and AL-based CIL.

\subsubsection{Regularization-based CIL}
Regularization-based methods introduce additional constraints to mitigate forgetting by constructing new loss functions. These constraints can be applied to network activations and penalize changes of important ones. Examples include less-forgetting learning \cite{LFL2016}, which penalizes activation difference, and learning without forgetting (LwF) \cite{LwF2018TPAMI}, which prevents activation changes between old and new networks. Alternatively, regularization can be imposed on network parameters, such as elastic weight consolidation (EWC) \cite{EWC2017nas}, which captures prior importance by a diagonally approximated Fisher information matrix. Building upon EWC, \cite{EWC2_2018ICPR} seeks a more appropriate replacement of the Fisher information matrix.

\subsubsection{Prototype-based CIL}
Prototype-based CIL mitigates forgetting by storing prototypes for each class to avoid overlapping representations of new and old classes. For instance, PASS \cite{PASS2021CVPR} augments the prototypes of features to distinguish previous classes. SSRE \cite{SSRE2022CVPR} introduces a prototype selection mechanism that selectively incorporates new samples into the distillation to enhance the distinctiveness between old and new classes. Fetril \cite{FeTrIL2023WACV} reduces forgetting by generating pseudo-features of old classes from new representations.

\subsubsection{Analytic Learning based CIL}
AL-based methods are newly developed exemplar-free tools specifically designed to protect data privacy during incremental learning. They are inspired by AL \cite{pil2001,brmp2021}, where the training of neural networks yields a closed-form solution using least squares (LS). Analytic CIL (ACIL) \cite{ACIL2022NeurIPS} was first developed, reformulating the CIL procedure into a recursive analytic learning process. The need for storing exemplars is evicted by preserving a correlation matrix. ACIL is subsequently followed by Gaussian Kernel Embedded Analytic Learning (GKEAL) \cite{GKEAL2023CVPR2023}. GKEAL specializes in few-shot CIL settings by adopting a Gaussian kernel process that excels in data-scarce scenarios.

AL-based CIL is an emerging CIL branch, exhibiting strong performance even at its first proposal as ACIL. However, relying solely on one linear projection would likely lead to under-fitting. In this paper, we overcome this limitation by introducing DS-AL, where the DAC module compensates for the lack of fitting ability.

\begin{figure*}[t!]
	\centering
	\includegraphics[width=1\linewidth]{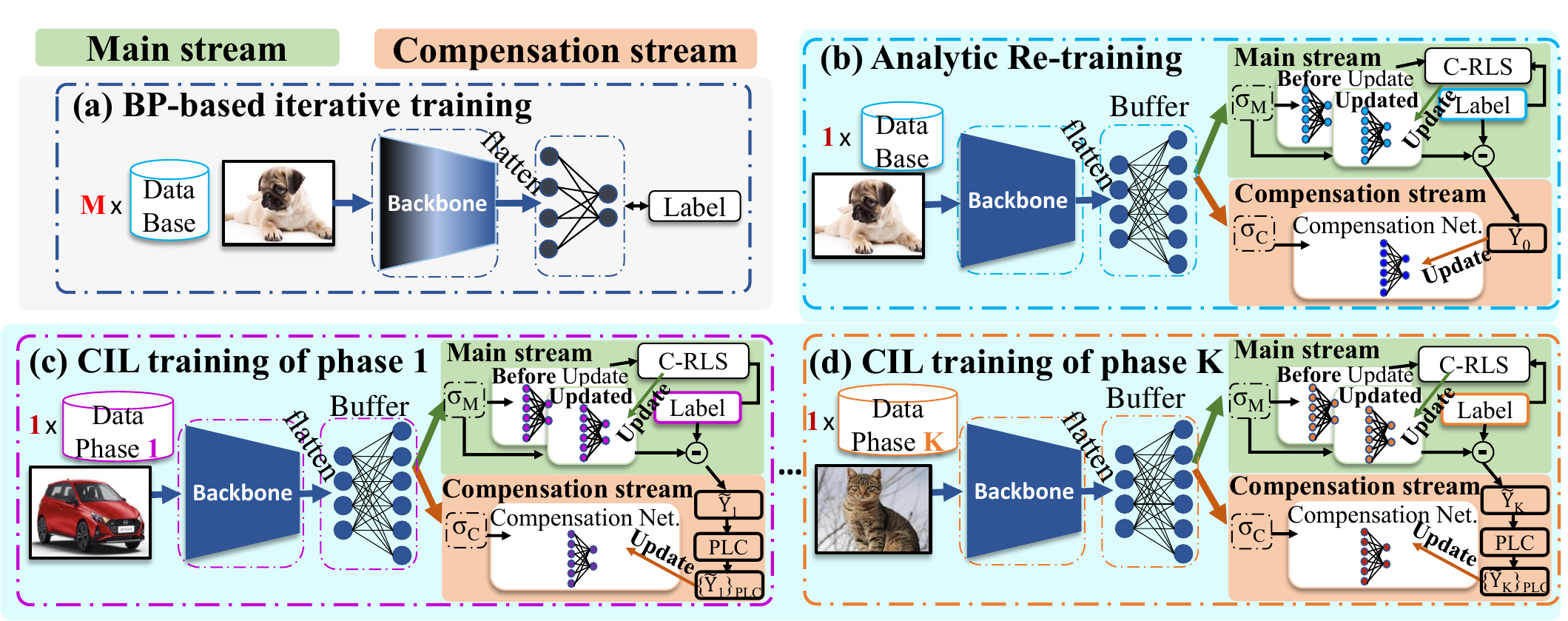}
	\caption{The DS-AL consists of (a) BP-based training on the base dataset, followed by (b)-(d) AL-based training steps. Each step includes a main stream (green block) with a C-RLS module and a compensation stream (orange block) using the mapping residue from main stream as the label. (b) DS-AL initializes CIL by replacing and re-training the classification head with an AL-based one that includes a buffer layer and a linear classifier. (c)-(d) The CIL is then recursively conducted, incorporating a PLC module (defined in \eqref{eq_plc}) to ensure the incremental constraint in the compensation stream.}
	\label{fig:acilflow}
\end{figure*}

\section{The Proposed Method}
In this section, we present our DS-AL training algorithm (see Figure \ref{fig:acilflow}). This starts with a backpropagation (BP) based backbone training on the base dataset (see Figure \ref{fig:acilflow}(a)). The AL-based training steps follow, including an AL-based re-training (i.e., Figure \ref{fig:acilflow}(b)) and follow-up CIL procedures (i.e., Figure \ref{fig:acilflow}(c)-(d)). We focus on the AL-based training steps, each of which is delivered in a dual-stream manner, with the main stream contributed by the C-RLS and the compensation stream governed by the DAC module. 


\subsection{BP-based Training}
The network is first trained with an iterative BP on the base dataset (see Figure \ref{fig:acilflow}(a)) for multiple epochs with an appropriate learning scheduling. For simplicity, we only discuss convolutional neural networks (CNN) examples. Let $\mathbf{W}_{\text{CNN}}$ and $\mathbf{W}_{\text{FCN}}$ be the parameters representing the CNN backbone and the fully-connected classifier. After the BP training, given an input $\mathbf{X}$, the output of the network is 
\begin{align}
	\mathbf{Y} = f_{\text{softmax}}(f_{\text{flat}}(f_{\text{CNN}}(\mathbf{X}, \mathbf{W}_{\text{CNN}}))\mathbf{W}_{\text{FCN}})
\end{align} 
where $f_{\text{CNN}}(\mathbf{X}, \bm{W}_{\text{CNN}})$ indicates the CNN output; $f_{\text{softmax}}$ and $f_{\text{flat}}$ are softmax function and  flattening operator (i.e., reshaping a tensor into a 1-D vector).

After the BP training, the backbone's weights are obtained. We then \textit{freeze the backbone and replace the classifier with a 2-layer AL network for re-training and the forthcoming CIL steps}. This is delivered in a dual-stream manner as follows.
\subsection{The Main Stream of DS-AL }
Prior to further processing, a few definitions are made. Let the network be incrementally trained for $K$ phases where training data of each phase comes with different classes. Let $\mathcal{D}_{k}^{\text{train}}\sim \{\bm{X}_{k}^{\text{train}}, \bm{Y}_{k}^{\text{train}}\}$ and $\mathcal{D}_{k}^{\text{test}}\sim \{\bm{X}_{k}^{\text{test}}, \bm{Y}_{k}^{\text{test}}\}$ be the training and testing datasets at phase $k$ ($k=0,1,\dots,K$). $\bm{X}_{k}\in\mathbb{R}^{N_{k}\times c\times w \times h}$ (e.g., $N_{k}$ images with a shape of $c\times w \times h$) and $\bm{Y}_{k}^{\text{train}}\in\mathbb{R}^{N_{k}\times d_{y_{k}}}$ (with phase $k$ including $d_{y_{k}}$ classes) are stacked input and label (one-hot) tensors.

\subsubsection{AL-based Re-training.} We first conduct the AL-based retraining (see Figure \ref{fig:acilflow}(b)) on the base training set (represented by $\mathcal{D}_{0}^{\text{train}}\sim \{\bm{X}_{0}^{\text{train}}, \bm{Y}_{0}^{\text{train}}\}$). The AL-based classifier includes a \textit{Buffer} layer and a linear mapping. The buffer layer is a common trick to improve AL-based methods' performance \cite{brmp2021}. To show this, we feed the input $\bm{X}_{0}^{\text{train}}$ through the trained CNN backbone, followed by a flattening operation, to extract features, i.e.,
\begin{align}\label{eq_flatten_fea}
	\bm{X}_{0}^{\text{cnn}} = f_{\text{flat}}(f_{\text{CNN}}(\bm{X}^{\text{train}}_{0}, \bm{W}_{\text{CNN}}))
\end{align}
where $\bm{X}_{0}^{\text{cnn}}\in\mathbb{R}^{N_{0}\times d_{\text{cnn}}}$. Then the buffer layer (denoted by $B$) is inserted to map the feature to a different space. That is, the feature $\bm{X}_{0}^{\text{cnn}}$ is mapped to the \textit{main stream activation} $\bm{X}_{\text{M},0}$ as follows
\begin{align}\label{eq_fea_base}
	\resizebox{0.88\linewidth}{!}{$\bm{X}_{\text{M},0} = \sigma_{\text{M}}(B(f_{\text{CNN}}(\bm{X}_{0}^{\text{train}}, \bm{W}_{\text{CNN}}))=\sigma_{\text{M}}(B(\bm{X}_{0}^{\text{cnn}}))$}
\end{align}
where $\sigma_{\text{M}}$ is an activation function. Here, we adopt \textit{ReLU} following many AL-based techniques \cite{ACIL2022NeurIPS}. 

There are a few choices to construct the buffer layer. The most common one is the random projection \cite{ACIL2022NeurIPS}. This is simply a linear projection to map the feature to a higher-dimension space, i.e., $B(\bm{X}_{0}^{\text{cnn}}) = \bm{X}_{0}^{\text{cnn}}\bm{X}_{\text{B}}$. Another known buffer structure is the Gaussian kernel mapping \cite{GKEAL2023CVPR2023}, which projects the feature to a space spanned by computation of the Gaussian kernel function. The buffer layer selection is not the focus of this paper, so we simply follow the ACIL to adopt the random projection represented by $\bm{X}_{\text{B}}\in\mathbb{R}^{d_{\text{cnn}}\times d_{B}}$. 

The second layer of the AL-based network is a linear layer (denoted by $\bm{W}_{\text{M}}^{(0)}$). This is constructed by linearly mapping the $\bm{X}_{\text{M},0}$ to the label matrix $\bm{Y}_{0}^{\text{train}}$ via solving the following regularized linear problem
\begin{align}\label{eq_0_L2}
	\underset{\bm{W}_{\text{M}}^{(0)}}{\text{argmin}} \quad \left\lVert\bm{Y}_{0}^{\text{train}} - \bm{X}_{\text{M},0}\bm{W}_{\text{M}}^{(0)}\right\rVert_{F}^{2} + {\gamma} \left\lVert\bm{W}_{\text{M}}^{(0)}\right\rVert_{F}^{2}
\end{align}
where $\left\lVert\cdot\right\lVert_{F}$ indicates the Frobenius norm, and ${\gamma}$ is a regularization parameter. The optimal estimation \cite{brmp2021} of $\bm{W}_{\text{M}}^{(0)}$ can be found in
\begin{align}\label{eq_ls_w_base}
	\bm{\hat W}_{\text{M}}^{(0)} = (\bm{X}_{\text{M},0}^{T}\bm{X}_{\text{M},0}+\gamma \bm{I})^{-1}\bm{X}_{\text{M},0}^{T}\bm{Y}_{0}^{\text{train}}
\end{align}
in which $\cdot^{T}$ is the matrix transpose operator. Upon obtaining $\bm{\hat W}_{\text{M}}^{(0)}$, the AL-based re-training is completed.

\subsubsection{The AL-based CIL.}
Upon completing the base training, we proceed with the CIL steps using C-RLS in a recursive and analytical manner. To illustrate this, without loss of generality, assume that we are given $\mathcal{D}_{0}^{\text{train}},\dots,\mathcal{D}_{k-1}^{\text{train}}$, and let $\bm{X}_{\text{M},0:k-1}\in\mathbb{R}^{N_{0:k-1}\times d_{B}}$ and $\bm{Y}_{0:k-1}\in\mathbb{R}^{N_{0:k-1}\times \sum_{i=1}^{k-1} d_{y_{i}}}$ be the concatenated activation and label tensors respectively from phase $0$ to $k-1$, i.e.,
\begin{align}
	\resizebox{0.88\linewidth}{!}{$
		\bm{X}_{\text{M},0:k-1} = \begin{bmatrix}	\bm{X}_{\text{M},0}  \\
			\vdots \\ 	\bm{X}_{\text{M},k-1} 
		\end{bmatrix} \ \ 	\bm{Y}_{0:k-1} =
		\begin{bmatrix}
			\bm{Y}_{0}^{\text{train}}&\bm{0}& \bm{0}&\dots&\bm{0}\\
			\bm{0}&\bm{Y}_{1}^{\text{train}}&\bm{0}&\dots& \bm{0}\\
			\vdots&\vdots&&\ddots\\
			\bm{0}& \bm{0}&  \bm{0}&\dots &\bm{Y}_{k-1}^{\text{train}}\end{bmatrix}$}
\end{align}
where
\begin{align}\label{eq_trans}
	\bm{X}_{\text{M},k-1} &= \sigma_{\text{M}}(B(f_{\text{flat}}(f_{\text{CNN}}(\bm{X}^{\text{train}}_{k-1}, \bm{W}_{\text{CNN}}))))
\end{align}
and $\bm{N}_{0:k-1}$ indicates the total number of data samples from phase $0$ to $k-1$. $\bm{Y}_{0:k-1}$'s sparse structure is because data classes among different phases are mutually exclusive. The learning problem can then be extended to
\begin{align}\label{eq_k-1_L2}	\resizebox{0.9\linewidth}{!}{$\underset{\bm{W}_{\text{M}}^{(k-1)}}{\text{argmin}} \quad \left\lVert\bm{Y}_{0:k-1} - \bm{X}_{\text{M},0:k-1}\bm{W}_{\text{M}}^{(k-1)}\right\rVert_{F}^{2} + {\gamma} \left\lVert\bm{W}_{\text{M}}^{(k-1)}\right\rVert_{F}^{2}.$}
\end{align}
According to \eqref{eq_ls_w_base}, at phase $k-1$, we have
\begin{align}\label{eq_w_k-1}
	\resizebox{0.9\linewidth}{!}{$	\bm{\hat W}_{\text{M}}^{(k-1)} = \left(\bm{X}_{\text{M},0:k-1}^{T}\bm{X}_{\text{M},0:k-1}+\gamma \bm{I}\right)^{-1}\ \bm{X}_{\text{M},0:k-1}^{T}\bm{Y}_{0:k-1}$}
\end{align}
where $\bm{\hat W}_{\text{M}}^{(k-1)}\in\mathbb{R}^{d_{\text{B}}\times \sum_{i=1}^{k}d_{y_{i}}}$ whose column size is expanded as $k$ increases. Let
\begin{align}\label{eq_R_k-1}
	\bm{R}_{\text{M},k-1} = (\bm{X}_{\text{M},0:k-1}^{T}\bm{X}_{\text{M},0:k-1}+\gamma \bm{I})^{-1}
\end{align}
be an \textit{inverted auto-correlation matrix} (iACM), which captures the correlation information of both present and past samples. Building upon the above derivations, \textit{the goal of the main stream CIL is to calculate $\bm{\hat W}_{\text{M}}^{(k)}$ using only $\bm{\hat W}_{\text{M}}^{(k-1)}$, $\bm{R}_{\text{M},k-1} $ and data $\bm{X}^{\text{train}}_{k}$ of phase $k$, without involving historical samples such as $\bm{X}_{0:k-1}$.} To this end, we formulate the CIL process as a C-RLS indicated in the following theorem.

\begin{theorem}\label{thm_acil}
	Let $\bm{\hat W}_{\text{M}}^{(k)}$  be the optimal estimation of $\bm{ W}_{\text{M}}^{(k)}$  using \eqref{eq_w_k-1} with all the training data from phase $0$ to $k$. Let $\bm{\hat W}_{\text{M}}^{(k-1)\prime} = [\bm{\hat W}_{\text{M}}^{(k-1)} \ \ \ \bm{0}]$,  and $\bm{\hat W}_{\text{M}}^{(k)}$  can be equivalent calculated via
	\begin{align}\label{eq_w_update}
		\resizebox{0.88\linewidth}{!}{$\bm{\hat W}_{\text{M}}^{(k)}
			= \bm{\hat W}_{\text{M}}^{(k-1)\prime} + \bm{ R}_{\text{M},k}\bm{X}_{\text{M},k}^{T}(\bm{ Y}_{{k}}^{\text{train}} - \bm{X}_{\text{M},k}\bm{\hat W}_{\text{M}}^{(k-1)\prime})$}
	\end{align}
	where
	\begin{align}\label{eq_R_update2}
		\resizebox{0.88\linewidth}{!}{$
			\bm{R}_{\text{M},k} = \bm{R}_{\text{M},k-1} - \bm{R}_{\text{M},k-1}\bm{X}_{\text{M},k}^{T}(\bm{I} + \bm{X}_{\text{M},k}\bm{R}_{\text{M},k-1}\bm{X}_{\text{M},k}^{T})^{-1}\bm{X}_{\text{M},k}\bm{R}_{\text{M},k-1}.$}
	\end{align}
\end{theorem}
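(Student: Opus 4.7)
The plan is to prove Theorem~\ref{thm_acil} in two parts mirroring its two equations. First I establish the iACM recursion \eqref{eq_R_update2}, and then I use it together with the block-sparse structure of $\bm{Y}_{0:k}$ to derive the weight update \eqref{eq_w_update}.

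For the iACM recursion, the key observation is that stacking the new-phase activations onto $\bm{X}_{\text{M},0:k-1}$ adds only a low-rank correction to the Gram matrix inside the inverse:
\begin{align*}
\bm{R}_{\text{M},k}^{-1} &= \bm{X}_{\text{M},0:k-1}^{T}\bm{X}_{\text{M},0:k-1} + \bm{X}_{\text{M},k}^{T}\bm{X}_{\text{M},k} + \gamma\bm{I} \\
&= \bm{R}_{\text{M},k-1}^{-1} + \bm{X}_{\text{M},k}^{T}\bm{X}_{\text{M},k}.
\end{align*}
Applying the Sherman--Morrison--Woodbury matrix inversion lemma with $\bm{U} = \bm{X}_{\text{M},k}^{T}$ and $\bm{V} = \bm{X}_{\text{M},k}$ then yields \eqref{eq_R_update2} directly.

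For the weight recursion, I first exploit the block-diagonal sparsity of $\bm{Y}_{0:k}$: samples of any single phase contribute zeros to the columns belonging to all other phases' classes, so
\begin{align*}
\bm{X}_{\text{M},0:k}^{T}\bm{Y}_{0:k} = \bigl[\,\bm{X}_{\text{M},0:k-1}^{T}\bm{Y}_{0:k-1} \ \ \big|\ \ \bm{X}_{\text{M},k}^{T}\bm{Y}_{k}^{\text{train}}\,\bigr].
\end{align*}
Combining this with $\bm{\hat W}_{\text{M}}^{(k-1)} = \bm{R}_{\text{M},k-1}\bm{X}_{\text{M},0:k-1}^{T}\bm{Y}_{0:k-1}$, I verify \eqref{eq_w_update} column-block by column-block. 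The new-class block reduces to $\bm{R}_{\text{M},k}\bm{X}_{\text{M},k}^{T}\bm{Y}_{k}^{\text{train}}$ immediately, while the old-class blocks collapse to the single identity
\begin{align*}
(\bm{I} - \bm{R}_{\text{M},k}\bm{X}_{\text{M},k}^{T}\bm{X}_{\text{M},k})\bm{R}_{\text{M},k-1} = \bm{R}_{\text{M},k},
\end{align*}
which follows by left-multiplying the evident relation $\bm{R}_{\text{M},k}^{-1}\bm{R}_{\text{M},k-1} = \bm{I} + \bm{X}_{\text{M},k}^{T}\bm{X}_{\text{M},k}\bm{R}_{\text{M},k-1}$ by $\bm{R}_{\text{M},k}$ and rearranging.

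The hard part is bookkeeping rather than real mathematics. One must justify the sparse-product identity above (the zero blocks in $\bm{Y}_{0:k}$ annihilate every cross-phase contribution) and keep straight the implicit zero-padding of $\bm{Y}_{k}^{\text{train}}$ inside \eqref{eq_w_update}, so that its subtraction against $\bm{X}_{\text{M},k}\bm{\hat W}_{\text{M}}^{(k-1)\prime}$ is dimensionally consistent with the $\sum_{i=0}^{k}d_{y_{i}}$-column form of $\bm{\hat W}_{\text{M}}^{(k)}$. Once those conventions are fixed, the remaining algebra is standard RLS manipulation together with the Woodbury lemma.
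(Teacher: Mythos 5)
Your proposal is correct and follows essentially the same route as the paper's proof: the Woodbury identity applied to the rank-update $\bm{R}_{\text{M},k}^{-1}=\bm{R}_{\text{M},k-1}^{-1}+\bm{X}_{\text{M},k}^{T}\bm{X}_{\text{M},k}$ gives \eqref{eq_R_update2}, and the block-diagonal structure of $\bm{Y}_{0:k}$ plus the identity $(\bm{I}-\bm{R}_{\text{M},k}\bm{X}_{\text{M},k}^{T}\bm{X}_{\text{M},k})\bm{R}_{\text{M},k-1}=\bm{R}_{\text{M},k}$ gives \eqref{eq_w_update}, with the zero-padding of $\bm{Y}_{k}^{\text{train}}$ handled exactly as the paper's own convention in \eqref{eq_res}. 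Both column blocks check out, so no gaps.
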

\begin{proof}
	See Supplementary material A\footnote{The supplementary materials can be found in https://github.com/ZHUANGHP/Analytic-continual-learning}.
\end{proof}

As shown in Theorem \ref{thm_acil}, the C-RLS concatenates the weight matrix (i.e., building $\bm{\hat W}_{\text{M}}^{(k-1)\prime}$) and updates it recursively. Notably,  the recursive formulas in \eqref{eq_w_update} and \eqref{eq_R_update2} are the widely recognized RLS \cite{RLS1996book}. Hence, with C-RLS, the main stream has constructed a non-forgetting CIL mechanism. Upon freezing the backbone, the CIL is equivalent to its joint-learning counterpart as shown in Theorem \ref{thm_acil}. That is, the model trained incrementally yields identical weights to that trained employing the entire data.

\subsection{The Compensation Stream of DS-AL}
The process of C-RLS is built entirely on one linear projection. When training samples are complex, under-fitting may occur. To overcome this, we introduce the compensation stream governed by the DAC module to help overcome the under-fitting limitation of linear mapping.

The compensation stream operates in a similar manner to the main stream, but it differs in that the label matrix is generated using the residue from the main stream. Without loss of generality, assume that we have executed the main stream at phase $k$ (i.e., obtaining $\bm{\hat W}_{\text{M}}^{(k)}$ and $\bm{ R}_{\text{M},k}$) and the compensation stream at phase $k-1$ (i.e., obtaining compensation weight matrix $\bm{\hat W}_{\text{C}}^{(k-1)}$ and corresponding correlation matrix $\bm{ R}_{\text{C},k-1}$). Let ${\bm{\tilde{Y}}}_{k}$ be the residue after conducting the main stream, i.e., 
\begin{align}\label{eq_res}
	{\bm{\tilde{Y}}}_{k} = [\bm{0}_{{N_{0:k-1}\times d_{y_{k-1}}}}\ \ {\bm{Y}}_{k}^{\text{train}}] - \bm{X}_{\text{M},k}\bm{\hat W}_{\text{M}}^{(k)}
\end{align}
where the zero matrix is due to the mutual-exclusive CIL setting. Let 
\begin{align}\label{eq_embed_C}
	\bm{X}_{\text{C},k} &= \sigma_{\text{C}}(B(f_{\text{flat}}(f_{\text{CNN}}(\bm{X}^{\text{train}}_{k}, \bm{W}_{\text{CNN}}))))
\end{align}
be the \textit{compensation stream activation}, where $\sigma_{\text{C}}$ is an activation function different from $\sigma_{\text{M}}$. Here $\sigma_{\text{C}}$ is a hyperparameter to be determined (e.g., can be $Tanh$, $Mish$ or others), which will be explored during the experiments.

The ${\bm{\tilde{Y}}}_{k}$ can be treated as the null space of 
$\bm{X}_{\text{M},k}$, where the embedding cannot reach. The key idea of the DAC module is to \textit{map to the null space using an alternative embedding $\bm{X}_{\text{C},k}$}, attempting to further improve the fitting ability. To achieve this, we construct a dual recursive CIL process. Similar to the C-RLS indicated in Theorem \ref{thm_acil}, the DAC module would lead to a resembling recursive structure. 

Before proceeding to the later step, a Previous Label Cleansing (PLC) process is adopted, i.e., 
\begin{align}\label{eq_plc}
	\{{\bm{\tilde{Y}}}_{k}\}_{\text{PLC}} = [\bm{0}_{{N_{0:k-1}\times d_{y_{k-1}}}}\ \ (\bm{\tilde{Y}}_{k})_{\text{new}}]
\end{align}
where $(\bm{\tilde{Y}}_{k})_{\text{new}}$ indicates a submatrix by keeping the last $d_{y_{k}}$ columns in $\bm{\tilde{Y}}_{k}$. Note that PLC does not apply during the initial phase, i.e., $	\{{\bm{\tilde{Y}}}_{0}\}_{\text{PLC}} = \bm{\tilde{Y}}_{0}$.

To illustrate the need for PLC, let $(\bm{\hat W}_{\text{M}}^{(k)})_{\text{old}}$ and $(\bm{\hat W}_{\text{M}}^{(k)})_{\text{new}}$ represent the submatrices by keeping the first $\sum_{i=0}^{k-1}d_{y_{i}}$ and the last $d_{y_{k}}$ columns respectively, namely, the weight components corresponding to the previous phases and the current phase $k$. We then rewrite \eqref{eq_res} into
\begin{align}\nonumber
	{\bm{\tilde{Y}}}_{k} &= [\bm{0}_{{N_{0:k-1}\times d_{y_{k-1}}}}\ \ {\bm{Y}}_{k}^{\text{train}}] - \bm{X}_{k}[(\bm{\hat W}_{\text{M}}^{(k)})_{\text{old}}\ \ (\bm{\hat W}_{\text{M}}^{(k)})_{\text{new}}]\\\label{eq_plc2}
	&=[-\bm{X}_{k}(\bm{\hat W}_{\text{M}}^{(k)})_{\text{old}}\ \ \ {\bm{Y}}_{k}^{\text{train}} - \bm{X}_{k}(\bm{\hat W}_{\text{M}}^{(k)})_{\text{new}}].
\end{align}
By comparing \eqref{eq_plc} with \eqref{eq_plc2}, we should find that the label using PLC in \eqref{eq_plc} is more reasonable. That is because, during the CIL, the data classes are mutually exclusive. This setting should also be applied in the proposed DAC module, otherwise \eqref{eq_plc2} could provide non-zero false supervision for previous phases (i.e., $-\bm{X}_{k}(\bm{\hat W}_{\text{M}}^{(k)})_{\text{old}}$). This will be empirically evidenced in the experiment section.

\begin{table*}[t]	
	\centering
	\fontsize{9pt}{9pt}\selectfont{
		%
			\begin{tabular}{clcccccccccccc}
				\toprule 
				\multirow{2}{*}{}&\multirow{2}{*}{Method} &\multirow{2}{*}{EFCIL?}& \multicolumn{3}{c}{\textit{CIFAR-100}} &  & \multicolumn{3}{c}{\textit{ImageNet-100}} &  & \multicolumn{3}{c}{\textit{ImageNet-Full}} \\ \cline{4-6} \cline{8-10} \cline{12-14} 
				&&& K=5       & 25    & 50   &  & K=5           & 25     & 50     &  & K=5       & 25     & 50    \\ \hline 
				\multirow{17}{*}{$\mathcal{\bar A}$}
				&LUCIR &$\times$&63.17&57.54&-&&70.84&61.44&-&&64.45&56.56&-\\
				&PODNet&$\times$&64.83&60.72&57.98&&75.54&68.31&62.48&&66.95&59.17&-\\ 
				&AANets&$\times$&66.31&62.31&-&&76.96&71.78&-&&67.73&61.78&-\\
				&RMM &$\times$&\underline{68.36}&64.12&-&&\underline{79.50}  &\underline{75.01}&-&&\underline{69.21}&63.93&-\\ 
				&FOSTER&$\times$&-&\underline{67.95}&-&&- & 69.34&-&&-&-&-\\
				\cline{2-14} 
				&LwF  &{$\checkmark$}&49.59&45.51&-&&53.62&44.32&-&&51.50&43.14&-\\
				&ACIL&{$\checkmark$}&66.30&65.95&66.01&&74.81&74.59&74.13&&65.34&64.63&64.35\\
				&PASS&{$\checkmark$}&(63.88*)&(56.86*)&(41.11*)&&72.24*&52.02*&30.59*&&-&-&-\\	
				&IL2A&{$\checkmark$}&(65.53*)&(53.15*)&(21.49*)&&-&-&-&&-&-&-\\
				&FeTrIL&{$\checkmark$}&{(66.30)}&-&-&&72.20&-&-&&66.10&-&-\\
				&iVoro-ND&{$\checkmark$}&{(67.55)}&-&-&&-&-&-&&-&-&-\\
				
				\cline{2-14} 
				
				&	\multirow{2}{*}{\textbf{DS-AL} } &	\multirow{2}{*}{$\checkmark$}& \textbf{66.39}& \underline{\textbf{66.20}}& \underline{\textbf{66.33}}&&\multirow{2}{*}{-}&\multirow{2}{*}{-}&\multirow{2}{*}{-}&&\multirow{2}{*}{-}&\multirow{2}{*}{-}&\multirow{2}{*}{-}\\
				&&&$\pm$ 0.09& $\pm$ 0.05& $\pm$ 0.11&&&&&&&&\\
				\cline{2-14} 
				
				&	\multirow{2}{*}{\textbf{DS-AL}} &	\multirow{2}{*}{$\checkmark$}
				&\underline{\textbf{{(68.39)}}}  &\underline{\textbf{{(68.40)}}} &\underline{\textbf{{(68.26)}}} &&\textbf{75.19}&\textbf{75.03}&\underline{\textbf{74.77}}&&\textbf{67.18}&\underline{\textbf{66.81}}&\underline{\textbf{66.79}}\\
				&&&$\pm$ 0.16  &$\pm$ 0.20 &$\pm$ 0.08 &&$\pm$ 0.10&$\pm$ 0.07&$\pm$ 0.11&&$\pm$ 0.03&$\pm$ 0.03&$\pm$ 0.02\\
				\hline 
				\multirow{15}{*}{$\mathcal{A}_{K}$}
				&LUCIR &$\times$&54.30&48.35&-&&60.00&49.26&-&&56.60&46.23&-\\
				&PODNet&$\times$&54.60&51.40&-&&67.60&55.34&-&&58.90&50.51&-\\ 
				&AANets&$\times$&59.39&53.55&-&&69.40&63.69&-&&60.84&53.21&-\\ 
				&RMM&$\times$&\underline{59.00}&56.50&-&&\underline{73.80}  &\underline{68.84}&-&&\underline{62.50}&55.50&-\\ \cline{2-14}
				&LwF&{$\checkmark$}&43.36  &41.66&-&&55.32  &55.12&-&&48.70  &49.84&-\\
				&ACIL&{$\checkmark$}&57.78&57.65&57.83&&66.98&67.16&67.22&&56.11&55.43&56.09\\
				&PASS&{$\checkmark$}&{({55.75*)}}&{(44.76*)}&{({28.02*)}}&&61.76*&37.46*&18.22*&&-&-&-\\
				&IL2A&{$\checkmark$}&{(53.36*)}&{(35.27*)}&{(11.03*)}&&-&-&-&&-&-&-\\	
				
				&iVoro-ND &{$\checkmark$}&{(57.25)}&-&-&&-&-&-&&-&-&-\\
				\cline{2-14} 
				&\multirow{2}{*}{\textbf{DS-AL} }  &\multirow{2}{*}{$\checkmark$}& \textbf{58.26}
				& \underline{\textbf{58.32}}& \underline{\textbf{58.37}}&&\multirow{2}{*}{-}&\multirow{2}{*}{-}&\multirow{2}{*}{-}&&\multirow{2}{*}{-}&\multirow{2}{*}{-}&\multirow{2}{*}{-}\\
				&&&$\pm$ 0.09& $\pm$ 0.12& $\pm$ 0.15&&&&&&&&\\
				\cline{2-14} 
				&\multirow{2}{*}{\textbf{DS-AL} }  &\multirow{2}{*}{$\checkmark$}&\underline{\textbf{{(61.44)}}}  &\underline{\textbf{{(61.41)}}} &\underline{\textbf{{(61.35)}}} &&\textbf{68.00}&{\textbf{67.72}}&\underline{\textbf{67.80}}&&\textbf{58.17}
				&\underline{\textbf{58.10}}&\underline{\textbf{58.15}}\\
				&&&$\pm$ 0.08  &$\pm$ 0.26 &$\pm$ 0.26 &&$\pm$ 0.17&$\pm$ 0.13&$\pm$ 0.15&&$\pm$ 0.04&$\pm$ 0.02&$\pm$ 0.03\\
				\bottomrule
		\end{tabular}}
\caption{Comparison of average accuracy $\mathcal{\bar A}$ and last-phase accuracy $\mathcal{A}_{K}$ among EFCIL and replay-based methods. Results from replay-based methods are cited from their papers. On CIFAR-100, data in bracket is for ResNet-18. Data in Bold are the best within EFCIL methods and data underlined are the best considering both categories. Data with ``*'' are those we reproduce via official codes. ``-'' means the results are not available. Lower lines in results of DS-AL are standard deviations. }
\label{table_avg_acc_ef}
\end{table*}

Upon providing the input $\bm{X}_{\text{C},k}$ and label $\{{\bm{\tilde{Y}}}_{k}\}_{\text{PLC}}$, we can proceed to recursively update the compensation weight $\bm{W}_{\text{C}}^{(k)}$ following the same C-RLS structure indicated in Theorem \ref{thm_acil}. Let $\bm{\hat W}_{\text{C}}^{(k-1)\prime} = [\bm{\hat W}_{\text{C}}^{(k-1)} \ \ \ \bm{0}]$, and we have
\begin{align}\label{eq_w_com_update}
	\resizebox{0.88\linewidth}{!}{$\bm{\hat W}_{\text{C}}^{(k)}
		= \bm{\hat W}_{\text{C}}^{(k-1)\prime} + \bm{ R}_{{\text{C},k}}\bm{X}_{\text{C},k}^{T}(\{{\bm{\tilde{Y}}}_{k}\}_{\text{PLC}} - \bm{X}_{\text{C},k}\bm{\hat W}_{\text{C}}^{(k-1)\prime})$}
\end{align}
where
\begin{align}\label{eq_R__com_update2}
	\resizebox{0.88\linewidth}{!}{$
		\bm{R}_{\text{C},k} = \bm{R}_{\text{C},k-1} - \bm{R}_{\text{C},k-1}\bm{X}_{\text{C},k}^{T}(\bm{I} + \bm{X}_{\text{C},k}\bm{R}_{\text{C},k-1}\bm{X}_{\text{C},k}^{T})^{-1}\bm{X}_{\text{C},k}\bm{R}_{\text{C},k-1}$}
\end{align}
which concludes the compensation stream.

Finally, during inference, the prediction is produced by combining both streams as follows.
\begin{align}\label{eq_inference}
	\bm{\hat Y}_{k}^{\text{(all)}} = \bm{X}_{\text{M},k}\bm{\hat W}_{\text{M}}^{(k)} + \mathcal{C}\bm{X}_{\text{C},k}\bm{\hat W}_{\text{C}}^{(k)}
\end{align}
where $\mathcal{C}$ is a compensation ratio indicating to what extent the network relies on compensation to enhance its fitting ability. The DS-AL is summarized in algorithm framework (we place the algorithm in Supplementary material B.

\section{Experiments}
In the section, we conduct experiments on various benchmark datasets, and compare the DS-AL with SOTA EFCIL methods, including LwF \cite{LwF2018TPAMI}, PASS \cite{PASS2021CVPR}, IL2A \cite{IL2A2021NeurIPS}, ACIL \cite{ACIL2022NeurIPS}, FeTrIL \cite{FeTrIL2023WACV} and iVoro-ND \cite{iVoro2023ICLR}. We also include replay-based methods, i.e., LUCIR \cite{LUCIR2019_CVPR}, PODNet \cite{podnet2020ECCV}, AANets \cite{AANet_2021_CVPR}, RMM \cite{RMM2021NeuriPS} and FOSTER \cite{FOSTER2022ECCV}. 

\subsection{Experimental Setup}
\textbf{Basic Setup.} We follow the setting from the ACIL, including datasets, architectures, training strategies, and CIL evaluation protocols. Datasets include CIFAR-100, ImageNet-100 and ImageNet-Full. Architectures are ResNet-32 on CIFAR-100 and ResNet-18 on both ImageNet-100 and ImageNet-Full, respectively. Additionally, we train a ResNet-18 on CIFAR-100 as many EFCIL methods (e.g. \cite{PASS2021CVPR} and \cite{iVoro2023ICLR}) adopt this structure.  We adopt the same training strategies as that of the ACIL (see details in Supplementary material C.

For CIL evaluation, the network is first trained (i.e., phase 0) on the base dataset containing half of the full data classes. Subsequently, the network gradually learns the remaining classes evenly for $K$ phases. Most existing methods only report small-phase results, e.g., those of $K=5,25$. We include $K=50,100, 250, 500$ as well to validate DS-AL's large-phase performance.

\textbf{Hyperparameters.} Two unique hyperparameters (i.e., $\sigma_{\text{C}}$ and $\mathcal{C}$) have been introduced in this paper. We utilize grid search to determine their values. For other hyperparameters shared among AL-based methods, we copy them from the ACIL for simplicity. The details can be found in Supplementary material C.

\subsection{Evaluation Metric}
Two metrics are adopted for evaluation. The overall performance is evaluated by the \textit{average incremental accuracy} (or average accuracy) $\mathcal{\bar A}$ (\%): $\mathcal{\bar A} = \frac{1}{	K+1}{\sum}_{k=0}^{K}\mathcal{A}_{k}$, where $\mathcal{A}_{k}$ indicates the average test accuracy at phase $k$ by testing the network on $\mathcal{D}_{0:k}^{\text{test}}$. A higher $\mathcal{\bar A}$ score is preferred when evaluating CIL algorithms. The other evaluation metric is the \textit{last-phase accuracy} $\mathcal{A}_{K}$ (\%) measuring the network's last-phase performance upon completing all CIL tasks. $\mathcal{A}_{K}$ (\%) is an important metric as reveals the gap between the CIL and the joint training, a gap all CIL methods stride to close.

\noindent\begin{minipage}[t]{0.47\textwidth}
	\vspace{0.1pt}
	\centering
	\includegraphics[width=1.0\linewidth]{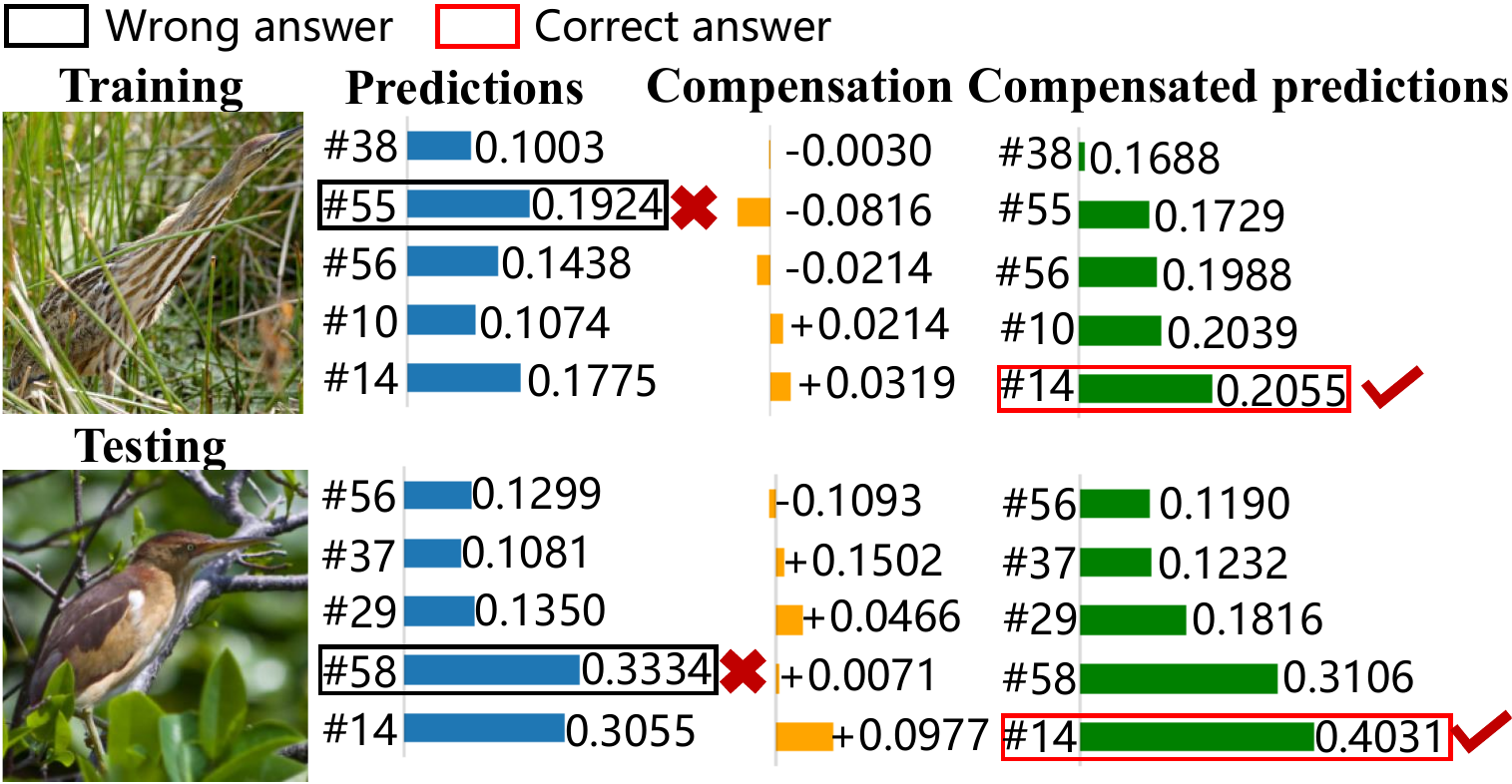}
	\captionof{figure}{The change of top-5 predictions of images w/ and w/o compensation ($\mathcal{C}=1.0$). The compensation improves the fitting as well as generalization abilities.}
	\label{fig:fitting}
\end{minipage}

\subsection{Comparison with State-of-the-arts}
We compared with both EFCIL and reply-based methods as demonstrated in Table \ref{table_avg_acc_ef}. The upper panel shows the average accuracy $\mathcal{\bar A}$ and the lower panel shows the last-phase accuracy $\mathcal{A}_{K}$. The activation function chosen in DS-AL is \textit{Tanh}. The compensation ratios $\mathcal{C}=0.6, 0.8, 1.4$  on CIFAR-100, ImageNet-100 and ImageNet-Full respectively.

\textbf{Compare with EFCIL Methods.} As shown in the upper panel in Table \ref{table_avg_acc_ef}, the DS-AL delivers the most competitive results among EFCIL methods. On CIFAR-100, it slightly outperforms the previous best technique (i.e., ACIL) by 0.48\%, 0.65\%, 0.67\% and 0.50\% of $\mathcal{A}_{K}$ for different phase settings. In particular, the performance of AL-based CIL (e.g., DS-AL and ACIL) remains roughly the same as $K$ changes, while other techniques receive degrading performance as $K$ increases. This allows our DS-AL to excel more significantly for a growing $K$. The DS-AL gives a similar performance on ImageNet-100. 

On ImageNet-Full,  the $\mathcal{\bar A}$ achieves 67.18\%, 66.91\%, 66.81\% and 66.79\%, overtaking the previous best EFCIL by \textbf{1.08}\%, \textbf{1.91}\%, \textbf{2.18}\% and \textbf{2.44}\% respectively. The DS-AL's last-phase accuracy yields 58.17\%, 58.13\%, 58.10\% and 58.15\%, leads the ACIL by \textbf{2.1}\%-\textbf{2.7}\% among various $K$ scenarios. The DS-AL has been shown to produce a more significant improvement on ImageNet-Full ($\approx2\%$ v.s. $\approx0.5\%$). This is because CIL tasks on ImageNet-Full are more challenging, requesting extra ability of fitting. Although the DS-AL and the ACIL belong to the same AL-based CIL family, the ACIL suffers from under-fitting on large-scale datasets such as ImageNet-Full.

\textbf{Compare with Replay-based Methods.}  Replay-based methods have access to historical samples during the CIL procedure, allowing an easier way to address catastrophic forgetting. As shown in Table \ref{table_avg_acc_ef}, the DS-AL runs behind replay-based counterparts for small-phase scenarios (e.g., $K=5$). For instance, the DS-AL gives 67.18\%,  worse than the 69.21\% from the RMM on ImageNet-Full. However, advantages of replaying samples are consumed as $K$ increases. For instance, for $K=25$, the DS-AL leads the best replaying method by \textbf{2.88}\% (66.81\% v.s. 63.93\%) averagely, and by \textbf{2.60}\% (58.10\% v.s. 55.50\%) at the last phase. This pattern on ImageNet-Full is consistent with those on CIFAR-100 and ImageNet-100. In general, the DS-AL begins to outperform replay-based methods from $K\ge25$, except on ImageNet-100 where the DS-AL falls behind the RMM at $K=25$, but the $\mathcal{\bar A}$ gap is very small (74.93\% v.s. 75.01\%). 

\noindent\begin{minipage}[h]{0.47\textwidth}
	\vspace{4pt}
	\centering
	\includegraphics[width=1.0\linewidth]{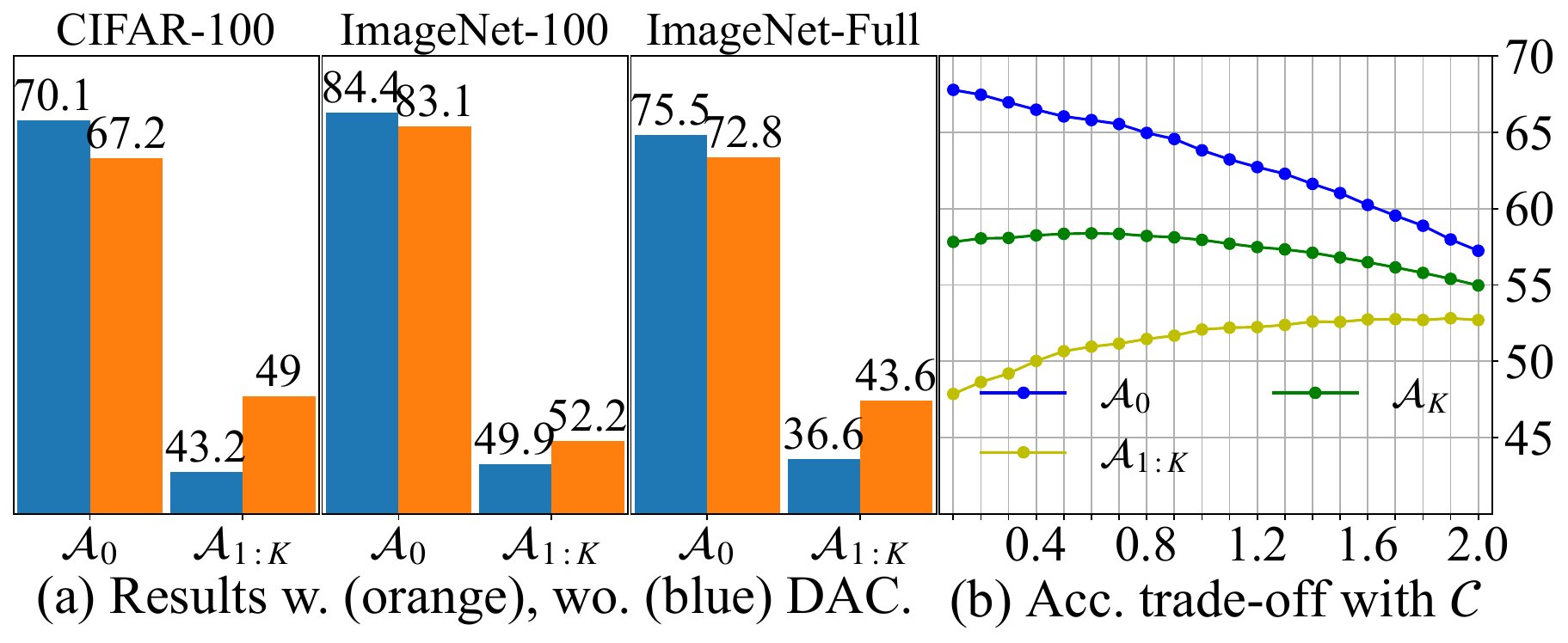}
	\captionof{figure}{(a) Stability-plasticity changes via the DAC module. (b) Analysis of compensation ratio $\mathcal{C}$ on CIFAR-100.}
	\label{fig:sta-pla}
\end{minipage}

\subsection{Analysis on Compensation Stream}
The compensation stream utilizes the DAC module to improve the main stream's fitting ability. To help understand this, we give specific examples (i.e., 5-phase experiments on ImageNet-100) during the CIL training. We plot the top-5 predictions with before and after the DAC module's contribution. As shown in the upper part of Figure \ref{fig:fitting}, the prediction for the sample class is inaccurate during training in the main stream. After applying the DAC module, the compensation stream provides an extra gain, thereby correcting the predicted results. In this example, we can observe a significant prediction change, suggesting a non-trivial enhancement on DS-AL's fitting ability, mitigating its limitation as an AL-based CIL technique. Such a fitting improvement also benefits the generalization power. As indicated in the lower panel of Figure \ref{fig:fitting}, during inference, the compensation stream can in fact correct the wrong prediction into an accurate one, especially when the top-2 predictions are comparable.

\textbf{Compensation Stream Enhances the Plasticity.} Balancing the stability (old knowledge) and plasticity (new knowledge) is crucial in CIL. Conventional AL-based CIL such as ACIL prioritizes stability as the backbone is trained completely on the base dataset. Through the compensation stream, our DS-AL can achieve a more reasonable stability-plasticity balance. To show this, we report the $\mathcal{A}_{K}$ on the base dataset (i.e., the first half) and the CIL dataset (i.e., the other half) respectively as shown in Figure \ref{fig:sta-pla}(a). By introducing the DAC module, novel classes learned incrementally receive a significant improvement (plasticity) that overtakes the performance decline on the base dataset (stability). For instance, the DAC module improves the novel class accuracy by \textbf{6.85}\% (36.61\%$\to$43.56\%) while only costing the base class accuracy to be reduced by 2.39\% (75.48\%$\to$72.79\%). This allows an improved overall accuracy, demonstrating that the DAC module is a beneficial adjustment by tuning the stability-plasticity balance. 

	\noindent\begin{minipage}[t]{0.47\textwidth}
	\vspace{0.1pt}
	\centering
	\includegraphics[width=1.0\linewidth]{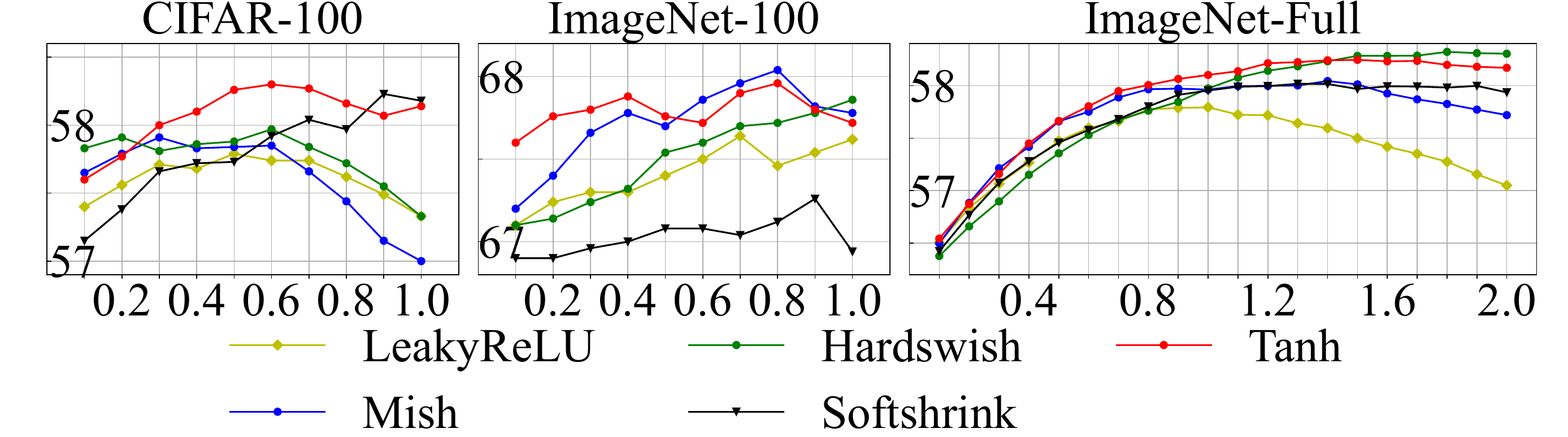}
	\captionof{figure}{Last-phase accuracy of DS-AL with different activation functions and compensation ratio.}
	\label{fig:activationfunction}
\end{minipage}

\subsection{Hyperparameter Analysis}
Activation function types and compensation ratio $\mathcal{C}$ influence the DS-AL's performance. Here we analyze these parameters in detail (with 5-phase setting). We plot the $\mathcal{A}_{K}$ with various activation functions under the scenarios of $\mathcal{C} = \left \{0.1,0.2,\dots1.0\right \}  $ in Figure \ref{fig:activationfunction} and specifically extend the $\mathcal{C}$ up to 2.0 on ImageNet-Full.

\textbf{Activation Function.} As shown in Figure \ref{fig:activationfunction}, the performance with all activation functions changes with $\mathcal{C}$. Among all candidates, we discover that the \textit{Tanh} function delivers constantly good performance across all three datasets while other functions fluctuate. Although it fails behind \textit{Mish} on ImageNet-100 and \textit{Hardwish} on ImageNet-Full, the gap is relatively negligible. This observation can be explained by the fact that the \textit{Tanh} function produces embedding values with a distribution vastly different from the main stream one (i.e., produced by \textit{ReLU}), which helps improve the fitting task to the null space of the main stream. Most commonly-seen activation functions resemble the \textit{ReLU} function curve and therefore lead to quite similar data distribution.

	%
	\textbf{Compensation Ratio.} As shown in Figure \ref{fig:activationfunction}, all results of different activation functions evolve when $\mathcal{C}$ changes. On CIFAR-100, results for most activation functions peak at the middle (e.g., $0.2<\mathcal{C}<0.8$). For example, the \textit{Tanh} reaches the top at around $\mathcal{C}=0.6$ and falls after. This is because the model needs appropriate compensation to enhance the fitting, while overcompensation may mislead the model. On ImageNet-100 and ImageNet-Full, similar patterns can be found but peak $\mathcal{C}$ values become higher in general (e.g. from $\mathcal{C}=0.6$ on CIFAR-100 to $\mathcal{C}=0.8$ and $\mathcal{C}=1.4$ of \textit{Tanh} on ImageNet-100 and ImageNet-Full). That may be caused by more challenging tasks, rendering the under-fitting issue more obvious, leading to an increase of compensation.  
	
	In addition, we explore $\mathcal{C}$'s effect on networks' stability and plasticity (see Figure \ref{fig:sta-pla}(b)). The $\mathcal{A}_0$ is found to decrease while $\mathcal{A}_{1:K}$ acts oppositely as $\mathcal{C}$ evolves, showing that more compensation will enhance the plasticity and suppress the stability. There is a leverage point of $\mathcal{C}$ after which the suppression contributes more and lead to degradation on $\mathcal{A}_{K}$.  
	
		\noindent\begin{minipage}[h]{0.47\textwidth}
		\centering
		\includegraphics[width=1\linewidth]{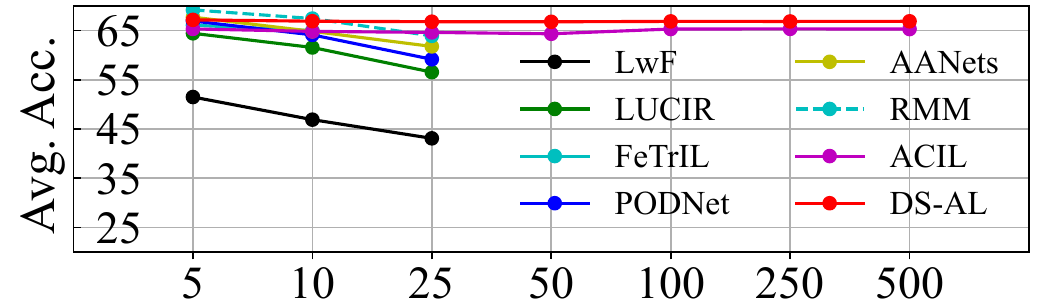}
		\captionof{figure}{The evolution of $\mathcal{\bar A}$ with the growing $K$.}
		\label{fig:K_evolution}
		\hfill
		\noindent
	\end{minipage}
	
	\subsection{Large-phase Performance} We have theoretically demonstrated that the DS-AL achieves a phase-invariant property. To empirically support our claim, we include large-phase examples with up to $K=100,250,500$ on ImageNet-Full (see Figure \ref{fig:K_evolution}). We observe that the DS-AL receives an unchanged $\mathcal{\bar A}$ across various large-phase scenarios even under the extreme case of $K=500$. As the ACIL and DS-AL both belong to the AL-based CIL, they share the same invariant property, with our method delivering a better performance. Other methods (including replay-based ones), however, lead to declining patterns as $K$ increases from $5$ to $25$. If $K$ continues to increase, a further performance reduction is expected. 
	
	\subsection{Ablation Study}
	Here we conduct an ablation study to justify the contributions of DAC and PLC. Experiments are done on ImageNet-Full with activation function \textit{Tanh} under the 5-phase setting. As shown in Table \ref{table_ablation}, performance with only the C-RLS is already competitive. However, with the DAC, the performance can be further improved. This improvement comes from the fitting and generalization enhancement from the compensation. With extra PLC that reduce the unnecessary compensation from previous classes, DS-AL can have even better performance. \\
		
				
		\noindent\begin{minipage}[h]{0.48\textwidth}	
			\renewcommand\arraystretch{1.3}
			\resizebox{0.95\textwidth}{!}{
				\begin{tabular}{lccc}
					\toprule
					Modules &$\mathcal{A_K}$ (\%)  & $\mathcal{\bar A}$ (\%) \\ 
					\hline 
					Concatenated-Recursive Least Squares (C-RLS) & 56.11 &65.34  \\ 		\hline
					+ Dual-Activation Compensation (DAC)& 57.43 &66.71  \\ 		\hline
					+ Previous Label Cleansing (PLC) & \textbf{58.17} &\textbf{67.18}
					\\
					\bottomrule
			\end{tabular} }
			\centering	
			\captionof{table}{Ablation study of the DAC  ($\mathcal{C}=1.4$) and PLC.}
			\label{table_ablation}
		\end{minipage}
		\section{Conclusion}
		In this paper, we propose a Dual-Stream Analytic Learning (DS-AL) to tackle the challenging EFCIL problem. The DS-AL comprises a main stream that formulates the CIL problem as a C-RLS solution, and a compensation stream with a DAC module to mitigate the under-fitting nature of linearity in the main stream via projecting an alternatively activated embedding onto the main stream's null space. The DS-AL establishes equivalence between CIL and its joint-learning counterpart while improving fitting power as an AL-based method. Experimental results demonstrate comparable performance to CIL across different phase counts. Introducing the compensation stream consistently enhances both fitting and generalization abilities.
		
		\section{Acknowledgements}
		This research was supported by the National Natural Science Foundation of China (Grant No. 6230070401), 2023 South China University of Technology-TCL Technology Innovation Fund, and Guangzhou Basic and Applied Basic Research Foundation (2023A04J1687). 
		\bibliography{dcacil.bib}

\begin{thebibliography}{25}
\providecommand{\natexlab}[1]{#1}

\bibitem[{Belouadah, Popescu, and Kanellos(2021)}]{cil_review2021NNs}
Belouadah, E.; Popescu, A.; and Kanellos, I. 2021.
\newblock A comprehensive study of class incremental learning algorithms for
  visual tasks.
\newblock \emph{Neural Networks}, 135: 38--54.

\bibitem[{Castro et~al.(2018)Castro, Marin-Jimenez, Guil, Schmid, and
  Alahari}]{EEIL2018_ECCV}
Castro, F.~M.; Marin-Jimenez, M.~J.; Guil, N.; Schmid, C.; and Alahari, K.
  2018.
\newblock End-to-End Incremental Learning.
\newblock In \emph{Proceedings of the European Conference on Computer Vision
  (ECCV)}.

\bibitem[{Douillard et~al.(2020)Douillard, Cord, Ollion, Robert, and
  Valle}]{podnet2020ECCV}
Douillard, A.; Cord, M.; Ollion, C.; Robert, T.; and Valle, E. 2020.
\newblock Podnet: Pooled outputs distillation for small-tasks incremental
  learning.
\newblock In \emph{Computer Vision--ECCV 2020: 16th European Conference,
  Glasgow, UK, August 23--28, 2020, Proceedings, Part XX 16}, 86--102.
  Springer.

\bibitem[{Guo, Lyu, and Mastorakis(2001)}]{pil2001}
Guo, P.; Lyu, M.~R.; and Mastorakis, N. 2001.
\newblock Pseudoinverse learning algorithm for feedforward neural networks.
\newblock \emph{Advances in Neural Networks and Applications}, 321--326.

\bibitem[{Hayes(1996)}]{RLS1996book}
Hayes, M.~H. 1996.
\newblock \emph{Statistical digital signal processing and modeling}.
\newblock John Wiley \& Sons.

\bibitem[{Hou et~al.(2019)Hou, Pan, Loy, Wang, and Lin}]{LUCIR2019_CVPR}
Hou, S.; Pan, X.; Loy, C.~C.; Wang, Z.; and Lin, D. 2019.
\newblock Learning a Unified Classifier Incrementally via Rebalancing.
\newblock In \emph{Proceedings of the IEEE/CVF Conference on Computer Vision
  and Pattern Recognition (CVPR)}.

\bibitem[{Jung et~al.(2016)Jung, Ju, Jung, and Kim}]{LFL2016}
Jung, H.; Ju, J.; Jung, M.; and Kim, J. 2016.
\newblock Less-forgetting learning in deep neural networks.
\newblock \emph{arXiv preprint arXiv:1607.00122}.

\bibitem[{Kirkpatrick et~al.(2017)Kirkpatrick, Pascanu, Rabinowitz, Veness,
  Desjardins, Rusu, Milan, Quan, Ramalho, Grabska-Barwinska
  et~al.}]{EWC2017nas}
Kirkpatrick, J.; Pascanu, R.; Rabinowitz, N.; Veness, J.; Desjardins, G.; Rusu,
  A.~A.; Milan, K.; Quan, J.; Ramalho, T.; Grabska-Barwinska, A.; et~al. 2017.
\newblock Overcoming catastrophic forgetting in neural networks.
\newblock \emph{Proceedings of the national academy of sciences}, 114(13):
  3521--3526.

\bibitem[{Li and Hoiem(2018)}]{LwF2018TPAMI}
Li, Z.; and Hoiem, D. 2018.
\newblock Learning without Forgetting.
\newblock \emph{IEEE Transactions on Pattern Analysis and Machine
  Intelligence}, 40(12): 2935--2947.

\bibitem[{Liu et~al.(2018)Liu, Masana, Herranz, Van~de Weijer, López, and
  Bagdanov}]{EWC2_2018ICPR}
Liu, X.; Masana, M.; Herranz, L.; Van~de Weijer, J.; López, A.~M.; and
  Bagdanov, A.~D. 2018.
\newblock Rotate your Networks: Better Weight Consolidation and Less
  Catastrophic Forgetting.
\newblock In \emph{2018 24th International Conference on Pattern Recognition
  (ICPR)}, 2262--2268.

\bibitem[{Liu et~al.(2023)Liu, Li, Schiele, and Sun}]{Online2023AAAI}
Liu, Y.; Li, Y.; Schiele, B.; and Sun, Q. 2023.
\newblock Online Hyperparameter Optimization for Class-Incremental Learning.
\newblock \emph{Proceedings of the AAAI Conference on Artificial Intelligence},
  37(7): 8906--8913.

\bibitem[{Liu, Schiele, and Sun(2021{\natexlab{a}})}]{AANet_2021_CVPR}
Liu, Y.; Schiele, B.; and Sun, Q. 2021{\natexlab{a}}.
\newblock Adaptive Aggregation Networks for Class-Incremental Learning.
\newblock In \emph{Proceedings of the IEEE/CVF Conference on Computer Vision
  and Pattern Recognition (CVPR)}, 2544--2553.

\bibitem[{Liu, Schiele, and Sun(2021{\natexlab{b}})}]{RMM2021NeuriPS}
Liu, Y.; Schiele, B.; and Sun, Q. 2021{\natexlab{b}}.
\newblock RMM: Reinforced Memory Management for Class-Incremental Learning.
\newblock \emph{Advances in Neural Information Processing Systems}, 34.

\bibitem[{Liu et~al.(2020)Liu, Su, Liu, Schiele, and Sun}]{Mnemonics_2020_CVPR}
Liu, Y.; Su, Y.; Liu, A.-A.; Schiele, B.; and Sun, Q. 2020.
\newblock Mnemonics Training: Multi-Class Incremental Learning Without
  Forgetting.
\newblock In \emph{Proceedings of the IEEE/CVF Conference on Computer Vision
  and Pattern Recognition (CVPR)}.

\bibitem[{Ma et~al.(2023)Ma, Ji, Huang, Shen, Gao, and Xu}]{iVoro2023ICLR}
Ma, C.; Ji, Z.; Huang, Z.; Shen, Y.; Gao, M.; and Xu, J. 2023.
\newblock Progressive Voronoi Diagram Subdivision Enables Accurate Data-free
  Class-Incremental Learning.
\newblock In \emph{The Eleventh International Conference on Learning
  Representations}.

\bibitem[{Petit et~al.(2023)Petit, Popescu, Schindler, Picard, and
  Delezoide}]{FeTrIL2023WACV}
Petit, G.; Popescu, A.; Schindler, H.; Picard, D.; and Delezoide, B. 2023.
\newblock FeTrIL: Feature Translation for Exemplar-Free Class-Incremental
  Learning.
\newblock In \emph{Proceedings of the IEEE/CVF Winter Conference on
  Applications of Computer Vision (WACV)}, 3911--3920.

\bibitem[{Rebuffi et~al.(2017)Rebuffi, Kolesnikov, Sperl, and
  Lampert}]{iCaRL2017_CVPR}
Rebuffi, S.-A.; Kolesnikov, A.; Sperl, G.; and Lampert, C.~H. 2017.
\newblock iCaRL: Incremental Classifier and Representation Learning.
\newblock In \emph{Proceedings of the IEEE Conference on Computer Vision and
  Pattern Recognition (CVPR)}.

\bibitem[{Wang et~al.(2022)Wang, Zhou, Ye, and Zhan}]{FOSTER2022ECCV}
Wang, F.-Y.; Zhou, D.-W.; Ye, H.-J.; and Zhan, D.-C. 2022.
\newblock FOSTER: Feature Boosting and Compression for Class-Incremental
  Learning.
\newblock In Avidan, S.; Brostow, G.; Ciss{\'e}, M.; Farinella, G.~M.; and
  Hassner, T., eds., \emph{Computer Vision -- ECCV 2022}, 398--414. Cham:
  Springer Nature Switzerland.
\newblock ISBN 978-3-031-19806-9.

\bibitem[{Wu et~al.(2019)Wu, Chen, Wang, Ye, Liu, Guo, and Fu}]{BiC2019_CVPR}
Wu, Y.; Chen, Y.; Wang, L.; Ye, Y.; Liu, Z.; Guo, Y.; and Fu, Y. 2019.
\newblock Large Scale Incremental Learning.
\newblock In \emph{Proceedings of the IEEE/CVF Conference on Computer Vision
  and Pattern Recognition (CVPR)}.

\bibitem[{Zhu et~al.(2021{\natexlab{a}})Zhu, Cheng, Zhang, and
  Liu}]{IL2A2021NeurIPS}
Zhu, F.; Cheng, Z.; Zhang, X.-y.; and Liu, C.-l. 2021{\natexlab{a}}.
\newblock Class-Incremental Learning via Dual Augmentation.
\newblock In Ranzato, M.; Beygelzimer, A.; Dauphin, Y.; Liang, P.; and Vaughan,
  J.~W., eds., \emph{Advances in Neural Information Processing Systems},
  volume~34, 14306--14318. Curran Associates, Inc.

\bibitem[{Zhu et~al.(2021{\natexlab{b}})Zhu, Zhang, Wang, Yin, and
  Liu}]{PASS2021CVPR}
Zhu, F.; Zhang, X.-Y.; Wang, C.; Yin, F.; and Liu, C.-L. 2021{\natexlab{b}}.
\newblock Prototype Augmentation and Self-Supervision for Incremental Learning.
\newblock In \emph{Proceedings of the IEEE/CVF Conference on Computer Vision
  and Pattern Recognition (CVPR)}, 5871--5880.

\bibitem[{Zhu et~al.(2022)Zhu, Zhai, Cao, Luo, and Zha}]{SSRE2022CVPR}
Zhu, K.; Zhai, W.; Cao, Y.; Luo, J.; and Zha, Z.-J. 2022.
\newblock Self-Sustaining Representation Expansion for Non-Exemplar
  Class-Incremental Learning.
\newblock In \emph{Proceedings of the IEEE/CVF Conference on Computer Vision
  and Pattern Recognition (CVPR)}, 9296--9305.

\bibitem[{Zhuang, Lin, and Toh(2021)}]{brmp2021}
Zhuang, H.; Lin, Z.; and Toh, K.-A. 2021.
\newblock Blockwise Recursive {Moore-Penrose} Inverse for Network Learning.
\newblock \emph{IEEE Transactions on Systems, Man, and Cybernetics: Systems},
  1--14.

\bibitem[{Zhuang et~al.(2023)Zhuang, Weng, He, Lin, and
  Zeng}]{GKEAL2023CVPR2023}
Zhuang, H.; Weng, Z.; He, R.; Lin, Z.; and Zeng, Z. 2023.
\newblock GKEAL: Gaussian Kernel Embedded Analytic Learning for Few-Shot Class
  Incremental Task.
\newblock In \emph{Proceedings of the IEEE/CVF Conference on Computer Vision
  and Pattern Recognition}, 7746--7755.

\bibitem[{Zhuang et~al.(2022)Zhuang, Weng, Wei, XIE, Toh, and
  Lin}]{ACIL2022NeurIPS}
Zhuang, H.; Weng, Z.; Wei, H.; XIE, R.; Toh, K.-A.; and Lin, Z. 2022.
\newblock ACIL: Analytic Class-Incremental Learning with Absolute Memorization
  and Privacy Protection.
\newblock In Koyejo, S.; Mohamed, S.; Agarwal, A.; Belgrave, D.; Cho, K.; and
  Oh, A., eds., \emph{Advances in Neural Information Processing Systems},
  volume~35, 11602--11614. Curran Associates, Inc.

\end{thebibliography}
		
	\end{document}